\newtheorem{theorem}{Theorem}
\newtheorem{proposition}[theorem]{Proposition}
\DeclareMathOperator*{\argmin}{arg\,min} 
\DeclareMathOperator*{\R}{\mathbb{R}}
\DeclareMathOperator*{\+}{\!+\!}
\newcommand{\sint}[2]{
    \int\displaylimits_{\mathclap{#1}}^{\mathclap{#2}}
}
\newcounter{mcspacecounter}
\definecolor{Gray}{gray}{0.9}
\definecolor{LightGray}{gray}{0.8}
\definecolor{LightCyan}{rgb}{0.88,1,1}
\definecolor{LawnGreen}{rgb}{0.48,0.98,0}
\definecolor{mygreen}{RGB}{28,172,0} 
\definecolor{mylilas}{RGB}{170,55,241}
\definecolor{lightPaleGreen}{RGB}{152,251,152}
\definecolor{fullPaleGreen}{RGB}{173,255,47}
\definecolor{lightPaleRed}{RGB}{240,128,128}
\definecolor{fullPaleRed}{RGB}{255,99,71}
\title{A Parametric MPC Approach to Balancing the Cost of Abstraction for Differential-Drive Mobile Robots}
\begin{document}

\author{Paul~Glotfelter and Magnus~Egerstedt
\thanks{This research was sponsored by Grants No. 1531195 from the U.S. National Science Foundation.}
\thanks{
The authors are with the Institute for Robotics and Intelligent Machines, Georgia Institute of Technology, Atlanta, GA 30332, USA, \{paul.glotfelter,magnus\}@gatech.edu.}%
}

\IEEEtriggeratref{1}

\maketitle
\thispagestyle{empty}
\pagestyle{empty}

\begin{abstract}


When designing control strategies for differential-drive mobile robots, one standard tool is the consideration of a point at a fixed distance along a line orthogonal to the wheel axis instead of the full pose of the vehicle.  This abstraction supports replacing the non-holonomic, three-state unicycle model with a much simpler two-state single-integrator model (i.e., a velocity-controlled point).  Yet this transformation comes at a performance cost, through the robot's precision and maneuverability.  This work contains derivations for expressions of these precision and maneuverability costs in terms of the transformation's parameters.  Furthermore, these costs show that only selecting the parameter once over the course of an application may cause an undue loss of precision.  Model Predictive Control (MPC) represents one such method to ameliorate this condition.  However, MPC typically realizes a control signal, rather than a parameter, so this work also proposes a Parametric Model Predictive Control (PMPC) method for parameter and sampling horizon optimization.  Experimental results are presented that demonstrate the effects of the parameterization on the deployment of algorithms developed for the single-integrator model on actual differential-drive mobile robots.

\end{abstract}

\section{Introduction}


Models are always abstractions in that they capture some pertinent aspects of the system under consideration whereas they neglect others. But models only have value inasmuch as they allow for valid predictions or as generators of design strategies. For example, in a significant portion of the many recent, multi-agent robotics algorithms for achieving coordinated objectives, single-integrator models are employed (e.g., \cite{Ji2006,Y2013,Hong2006,Ni2010}). Arguably, such simple models have enabled complex control strategies to be developed, yet, at the end of the day, they have to be deployed on actual physical robots.  This paper formally investigates how to strike a balance between performance and maneuverability when mapping single-integrator controllers onto differential-drive mobile robots. 

Due to the single-integrator model's prevalence as a design tool, a number of methods have been developed for mapping from single-integrator models to more complex, non-holonomic models.  For example, the authors of \cite{Cortes2002} achieve a map from single integrator to unicycle by leveraging a control structure introduced in \cite{Astolfi1999}.  However, this map does not come with formal guarantees about the degree to which the unicycle system approximates the single-integrator system.  One effective solution to this problem is to utilize a so-called Near-Identity Diffeomorphism (NID) between single-integrator and unicycle systems, as in \cite{Olfati-Saber2002}, \cite{Olfati-Saber}, where the basic idea is to perturb the original system ever-so-slightly (the near-identity part) and then show that there exists a diffeomorphism between a lower-dimensional version of the perturbed system's dynamics and the single-integrator dynamics. As the size of the perturbation is given as a design parameter, a bound on how far the original system may deviate from the single-integrator system follows automatically.

A concept similar to NIDs from single-integrator to unicycle dynamics appears in the literature in different formats.  For example, \cite{Ogren2001} utilizes this technique from a kinematics viewpoint to stabilize a differential-drive-like system.  This "look-ahead" technique also arises in feedback linearization methods as a mathematical tool to ensure that the differential-drive system is feedback linearizable (e.g., \cite{Oriolo2002, Yang2005}).


This paper utilizes the ideas in \cite{Olfati-Saber2002}, \cite{Olfati-Saber} to show that the NID incurs an abstraction cost, in terms of precision and maneuverability, that is based on the physical geometry of the differential-drive robots; in particular, the precision cost focuses on increasing the degree to which the single-integrator system matches the unicycle-modeled system, and the maneuverability cost utilizes physical properties of the differential-drive systems to limit the maneuverability requirements imposed by the transformation. By striking a balance between these two costs, a one-parameter family of abstractions arises.  However, the maneuverability cost shows that only selecting the parameter once over the course of an experiment may cause a loss of precision.

A potential solution to this issue is to repeatedly optimize the parameter based on the system's model and a suitable cost metric.  Model Predictive Control (MPC) represents one such method.  In particular, MPC approaches solve an optimal control problem over a time interval, utilize a portion of the controller, and re-solve the problem over the next time interval, effectively producing a state- and time-based controller.  The authors of \cite{Droge2011, Droge2011-2} produce such a Parametric Model Predictive Control (PMPC) formulation.  However, this formulation does not permit the cost metric to influence the time interval, which has practical performance implications.  Using the formulated precision and maneuverability costs, this work formulates an appropriate PMPC cost metric and extends the work in \cite{Droge2011, Droge2011-2} to integrate a sampling horizon cost directly into the PMPC program.


This paper is organized as follows:  Sec.~\ref{sec:system-of-interest} presents the system of interest and introduces the inherent trade-off contained in the NID.  Sec.~\ref{sec:a-parametric-mpc-formulation} discusses the PMPC formulation.   Sec.~\ref{sec:main-results} formulates the cost functions that allow a balanced selection of the NID's parameters, with respect to the generated cost functions.  To demonstrate and verify the main results of this work, Sec.~\ref{sec:numerical-results} shows data from simulations and physical experiments, with Sec.~\ref{sec:conclusion} concluding the paper.

\section{From Unicycles to Single Integrators}
\label{sec:system-of-interest}

This article uses the following mathematical notation. The expression $\|\cdot\|$ is the usual Euclidean norm.  The symbol $\partial_{x} f(x)$ represents the partial derivative of the function $f : \R^{n} \to \R^{m}$ with respect to the variable $x$, assuming the convention that $\partial_{x} f(x) \in \R^{m \times n}$.  The symbol $\R_{\geq 0}$ refers to the real numbers that are greater than or equal to zero.

As the focus of the paper is effective abstractions for controlling differential-drive robots, this section establishes the Near-Identity Diffeomorphism (NID) that provides a relationship between single-integrator and unicycle models.  That is,  systems whose pose is given by planar positions $\bar{x}=\left[x_1~ x_2 \right]^T$ and orientations $\theta$, with the full state given by $x=\left[ \bar{x}^T~ \theta \right]^T=\left[ x_1~ x_2~ \theta \right]^T$. The associated unicycle dynamics are given by (dropping the dependence on time $t$)
\begin{equation}
    \dot{x} = 
    \begin{bmatrix}
        R(\theta)e_{1} & {0}\\ 
        \mathbf{0} & 1
    \end{bmatrix}
    \begin{bmatrix}
        v \\ 
        \omega
    \end{bmatrix},
    \label{eq:unicycle-model}
\end{equation}
where the control inputs $v, \omega \in \mathbb{R}$ are the linear and rotational velocities, respectively, $\mathbf{0}$ is a zero-vector of the appropriate dimension, and 
\begin{equation}
    e_{1} = 
    \begin{bmatrix}
        1 & 0
    \end{bmatrix}^{T}, ~
    R(\theta) = 
    \begin{bmatrix}
        \cos(\theta) & -\sin(\theta) \\ 
        \sin(\theta) & \cos(\theta)
    \end{bmatrix}.
\end{equation}

Letting 
\begin{equation}
    u_{x} = 
    \begin{bmatrix}
        v & \omega 
    \end{bmatrix}^{T}
\end{equation}
be the collective control input to the unicycle-modeled agent, the objective becomes to turn this model into a single-integrator model. To this end, we here recall the developments in \cite{Olfati-Saber2002}.  Let $x_{si} \in \mathbb{R}^2$ be given by
\begin{equation}
    x_{si} = \Phi(x, l) = \bar{x} + l R(\theta)e_{1},
    \label{eq:nid}
\end{equation}
where $l \in (0, \infty)$  is a constant.  The map $\Phi(x, l)$ is, in fact, the NID, as defined in \cite{Olfati-Saber2002}.  Geometrically, the point $x_{si}$ is simply given by a point at a distance $l$ directly in front of the unicycle with pose  $x$.

Now, assume that the dynamics of $x_{si}$ are given by a controller
\begin{equation}
    \dot{x}_{si} = u_{si} ,
\end{equation}
where $u_{si} \in \R^{2}$ is continuously differentiable, and compare this system to the time-derivative of \eqref{eq:nid}, which yields 
\begin{equation}
    \dot{x}_{si} = u_{si} =
    \begin{bmatrix}
        \cos(\theta) & -l\sin(\theta) \\ 
        \sin(\theta) & l\cos(\theta)
    \end{bmatrix}
    u_{x}
    = R_{l}(\theta)u_{x} .
    \label{eq:diffeomorphism}
\end{equation}
Note that the NID maps from three degrees of freedom to two degrees of freedom.  As a consequence, the resulting unicycle controller cannot explicitly affect the orientation $\theta$ of the unicycle model.

By \cite{Olfati-Saber2002}, $R_{l}(\theta)$ is invertible, yielding a relationship between $u_{si}$ and $u_{x}$.  Consequently, \eqref{eq:diffeomorphism} allows the transformation of linear, single-integrator algorithms into algorithms in terms of the non-linear, unicycle dynamics.  Note that in this paper, which is different from \cite{Olfati-Saber2002}, we let $\dot{l} = 0$ over the PMPC time intervals (i.e., $l$ is a constant value).

The unicycle model in \eqref{eq:unicycle-model} is not directly realizable on a differential-drive mobile robot. However, the relationship between the control inputs to the unicycle model and the differential-drive model is given by
\begin{equation}
    v = \dfrac{r_{w}}{2}(\omega_{r} + \omega_{l}), ~ \omega = \dfrac{r_{w}}{l_{w}}(\omega_{r} - \omega_{l}),
    \label{eq:differential-drive}
\end{equation}
where $\omega_{r}$ and $\omega_{l}$ are the right and left wheel velocities, respectively.  The wheel radius $r_{w}$ and base length $l_{w}$ encode the geometric properties of the robot.

In the discussion above, the parameter $l$ (i.e., the distance off the wheel axis to the new point) is not canonical.  Moreover, it plays an important role since
\begin{equation} 
    \|\bar{x} - x_{si}\| = l.
    \label{eq:geo-rel}
\end{equation}
The above equation seems to indicate that one should simply choose $l \in (0,~ \infty)$ to be as small as possible.  However, the following sections show that small values of $l$ induce high maneuverability costs.  

In order to strike a balance between precision and maneuverability, we will, 
for the remainder of this paper, assume that the control input to the unicycle model is given by 
\begin{equation}
    u_{x} = R_{l}(\theta)^{-1}u_{si},
    \label{eq:si-to-uni}
\end{equation}
where $u_{si}$ is the control input supplied by a single-integrator algorithm.  Sec.~\ref{sec:main-results} contains the further investigation of the effects of the parameter $l$ on the precision and maneuverability implications of the transformation in \eqref{eq:nid}.


\section{A Parametric MPC Formulation}
\label{sec:a-parametric-mpc-formulation}

Having introduced the system of interest, this section contains a derivation of a Parametric Model Predictive Control (PMPC) method with a variable sampling interval for general, nonlinear systems.  Later, Sec.~\ref{sec:numerical-results} utilizes a specific case of these results.  In general, MPC methods solve an optimal control problem over a time interval and use only a portion of the obtained controller (for a small amount of time) before resolving the problem, producing a time- and state-based controller.  In this case, PMPC optimizes the parameters of a system.  That is, this method finds the optimal, constant parameters of a system, rather than a time-varying control input, over a time interval.  For clarity, this section specifies dependencies on time $t$.  Let 
\begin{equation}
    \dot{x}(t) = f(x(t), p, t),~ x_{t_{0}} = x(t_{0}) ,
\end{equation}
where $x(t) \in \R^{n}$, $p \in \R^{m}$, and $f(\cdot)$ is continuously differentiable in $x$, measurable in $t$.  The program
\begin{align}
    & \argmin_{p \in \R^{m}, \Delta t \in \R_{\geq 0}} J(p, \Delta t) = \sint{t_{0}}{t_{0} + \Delta t} L(x(s), p, s) ds + C(\Delta t) \\
    & \text{s.t. } \dot{x}(t) = f(x(t), p, t) \\
    & \quad~ x(t_{0}) = x_{t_{0}} ,
\end{align}
expresses the PMPC problem of interest, where $L(\cdot)$ is continuously differentiable in $x$ and $p$.  Note that, in this case, both $\Delta t$ and $p$ are decision variables determined by the PMPC program.

\subsection{Optimality Conditions}

This section contains the derivation of the necessary, first-order optimality conditions for the PMPC formulation, realizing gradients for the proposed cost.  In particular, the derivation proceeds by calculus of variations.
\begin{proposition}
    \label{prop:mpc-like}
    The augmented cost derivatives $\partial_{p}\tilde{J}(p, \Delta t)$, $\partial_{\Delta t}\tilde{J}(p, \Delta t)$ are
    \begin{align*}
        &\partial_{p}\tilde{J}(p, \Delta t) = \sint{t_{0}}{t_{0} + \Delta t} \partial_{p}L(x(s), p, s) + \lambda(s)^{T}\partial_{p}f(x(s), p, s) ds \\
        &\partial_{\Delta t}\tilde{J}(p, \Delta t) = L(x(t_{0} + \Delta t), p, t_{0} + \Delta t) + \partial_{\Delta t}C(\Delta t) ,
    \end{align*}
    where the augmented cost $\tilde{J}(p, \Delta t)$ (i.e., $J(p, \Delta t)$ augmented with the dynamics constraint) is given by 
    \begin{equation}
        \begin{split}
        & \tilde{J}(p, \Delta t) = \\
        & \sint{t_{0}}{t_{0} + \Delta t} L(x(s), p, s) \+ \lambda(s)^{T}(f(x(s), p, s) \!-\! \dot{x}(s)) ds \+ C(\Delta t) .
        \end{split}
    \end{equation}
\end{proposition}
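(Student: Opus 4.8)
The plan is to treat this as a standard adjoint-based sensitivity computation via calculus of variations, where the multiplier $\lambda(\cdot)$ is introduced precisely to cancel the awkward terms arising from the implicit dependence of the trajectory on $p$. First I would define the state sensitivity $\xi(s) = \partial_{p} x(s)$, observing that, because the initial condition $x(t_{0}) = x_{t_{0}}$ is independent of $p$, we have $\xi(t_{0}) = 0$. The continuous differentiability of $f$ in $x$ guarantees that $\xi$ exists and is well-behaved, and, crucially, that we may interchange the order of differentiation to justify $\partial_{p}\dot{x}(s) = \dot{\xi}(s)$.

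Next I would differentiate $\tilde{J}$ with respect to $p$ at fixed $\Delta t$, applying the chain rule through every appearance of $x(s)$. This produces three kinds of terms: the explicit partials $\partial_{p}L$ and $\lambda^{T}\partial_{p}f$ that we want to keep; the implicit terms $(\partial_{x}L + \lambda^{T}\partial_{x}f)\,\xi$ that couple to the unknown sensitivity; and the term $-\lambda^{T}\dot{\xi}$ coming from $\partial_{p}\dot{x}$. The key maneuver is to integrate $-\int \lambda^{T}\dot{\xi}\,ds$ by parts, converting it into $\int \dot{\lambda}^{T}\xi\,ds$ plus the boundary term $-\lambda^{T}\xi$ evaluated at the endpoints, whose lower contribution vanishes since $\xi(t_{0}) = 0$. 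Collecting the coefficient of $\xi(s)$ inside the integral and the surviving boundary term at $s = t_{0} + \Delta t$, I would then impose the two adjoint conditions, the costate equation $\dot{\lambda} = -(\partial_{x}L)^{T} - (\partial_{x}f)^{T}\lambda$ and the transversality condition $\lambda(t_{0} + \Delta t) = 0$, so that every $\xi$-dependent contribution vanishes identically. What remains is exactly the claimed expression for $\partial_{p}\tilde{J}$.

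For the sampling-horizon derivative $\partial_{\Delta t}\tilde{J}$, the argument is simpler: varying $\Delta t$ does not alter the trajectory on $[t_{0}, t_{0} + \Delta t]$, since the state is generated by forward integration from $t_{0}$ and is insensitive to where integration stops, so the only contributions are through the upper limit of the integral and through $C(\Delta t)$. Applying the fundamental theorem of calculus to the upper limit yields the integrand evaluated at $s = t_{0} + \Delta t$, namely $L + \lambda^{T}(f - \dot{x})$, plus $\partial_{\Delta t}C$. Because the dynamics constraint $\dot{x} = f$ holds identically along the trajectory, the multiplier term is zero, leaving $L(x(t_{0} + \Delta t), p, t_{0} + \Delta t) + \partial_{\Delta t}C(\Delta t)$, as claimed.

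The main obstacle is the rigorous justification of the interchanges of differentiation and integration and of the integration by parts: these require that $\xi$ and $\lambda$ be absolutely continuous and that the integrands be suitably dominated, which I would secure from the stated hypotheses (continuous differentiability of $f$ and $L$ in $x$ and $p$, measurability in $t$) together with standard results on the differentiability of ODE solutions with respect to parameters. A secondary point worth verifying is that the transversality condition $\lambda(t_{0} + \Delta t) = 0$ is the correct boundary choice here, rather than a fixed terminal-state condition, which follows because the terminal state $x(t_{0} + \Delta t)$ is free in this formulation.
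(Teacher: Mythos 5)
Your proof is correct and follows essentially the same route as the paper: an adjoint-based variational argument in which integration by parts, the costate equation $\dot{\lambda}(t) = -\partial_{x}L(x(t),p,t)^{T} - \partial_{x}f(x(t),p,t)^{T}\lambda(t)$, and the terminal condition $\lambda(t_{0}+\Delta t)=0$ cancel every term involving the unknown state sensitivity, while the $\Delta t$-derivative comes from evaluating the constraint-annihilated integrand at the upper limit together with $\partial_{\Delta t}C(\Delta t)$. The only difference is presentational: you differentiate directly through the sensitivity matrix $\partial_{p}x(s)$ and invoke the fundamental theorem of calculus, whereas the paper carries out an explicit perturbation $p \mapsto p + \epsilon\gamma$, $\Delta t \mapsto \Delta t + \epsilon\tau$ with a Taylor expansion, an integral split at $t_{0}+\Delta t$, and the mean value theorem --- the same cancellations in different clothing.
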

\begin{proof}
    The proof proceeds by calculus of variations.  Perturb $p$ and $\Delta t$ as $p \mapsto p + \epsilon \gamma$ and $\Delta t \mapsto \Delta t + \epsilon \tau$, where $\gamma \in \R^{m}$, $\tau \in \R$.  The perturbed augmented cost is
    \begin{equation}
        \begin{split}
            & \tilde{J}(p \+ \epsilon \gamma, \Delta t \+ \epsilon \tau) = \\
            & \sint{t_{0}}{t_{0} + \Delta t \+ \epsilon \tau} L(x(s) \+ \epsilon \eta(s), p \+ \epsilon \gamma, s) \\
            &\qquad \lambda(s)^{T}(f(x(s) \+ \epsilon \eta(s), p \+ \epsilon \gamma, s) \!-\! \dot{x}(s) \!-\! \epsilon \dot{\eta}(s))
            ds \+ \\
            & C(\Delta t \+ \epsilon \tau) \+ o(\epsilon) .
        \end{split}
    \end{equation}
    Performing a Taylor expansion yields that 
    \begin{equation}
        \begin{split}
             & \tilde{J}(p \+ \epsilon \gamma, \Delta t \+ \epsilon \tau) = \\
             & \sint{t_{0}}{t_{0} \+ \Delta t \+ \epsilon \tau} L(x(s), p, s) \+ \epsilon \partial_{x}L(x(s), p, s)\eta(s) \+ \epsilon\partial_{p}L(x(s), p, s)\gamma \\
             & \quad \+ \lambda^{T}(f(x(s), p, s) \+ \epsilon \partial_{x}f(x(s), p, s)\eta(s) \\
             & \quad\qquad \+ \epsilon\partial_{p}f(x(s), p, s)\gamma \!-\! \dot{x}(s) \!-\! \epsilon \dot{\eta}(s)) ds \\ 
             & \+ C(\Delta t) + \epsilon \partial_{\Delta t}C(\Delta t) \tau \+ o(\epsilon) .
        \end{split}
    \end{equation}
    
    The proof now proceeds with multiple steps.  First, the application of integration by parts to the quantity $\lambda(t)^{T}\dot{\eta}(t)$.  Second, the subtraction of the costs $\tilde{J}(p + \epsilon \gamma, \Delta t + \epsilon \tau) - \tilde{J}(p, \Delta t)$.  Note that, to subtract the costs properly, the integral in $\tilde{J}(p + \epsilon \gamma, t + \epsilon \tau)$ must be broken up into two intervals: $[t, t + \Delta t]$ and $[t + \Delta t, t + \Delta t + \tau]$.  Furthermore, the costate assumes the usual definition: $\dot{\lambda}(t) = -\partial_{x}L(x(t), p, t)^{T} - \partial_{x}f(x(t), p, t)^{T}\lambda(t)$ with the boundary condition $\lambda(t_{0} + \Delta t) = 0$.  Applying the mean value theorem and taking the limit as $\epsilon \to 0$ shows that
    \begin{equation}
        \label{eq:cost-limit}
        \begin{split}
           & \lim_{\epsilon \to 0} \dfrac{\tilde{J}(p \+ \epsilon \gamma, \Delta t \+ \epsilon \tau) - \tilde{J}(p, \Delta t)}{\epsilon} = \\
            & \left[~ \sint{t_{0}}{t_{0} \+ \Delta t} \partial_{p}L(x(s), p, s) \+ \lambda(s)^{T} \partial_{p}f(x(s), p, s) ds \right] \gamma \\
            & \+ \left[ \partial_{\Delta t}C(\Delta t) \+ L(x(t_{0} \+ \Delta t), p, t_{0} \+ \Delta t) \right] \tau,
        \end{split}
    \end{equation}
    which is linear in $\tau$ and $\gamma$, and provides the final expressions 
    \begin{align}
        & \partial_{p}\tilde{J}(p, \Delta t) = \sint{t_{0}}{t_{0} \+ \Delta t} \partial_{p}L(x(s), p, s) \+ \lambda(t)^{T}  \partial_{p}f(x(s), p, s) ds \\
        & \partial_{\Delta t}\tilde{J}(p, \Delta t) = \partial_{\Delta t}C(\Delta t) \+ L(x(t_{0} + \Delta t), p, t_{0} \+ \Delta t) ,
    \end{align}
    completing the proof.
\end{proof}

Interestingly, both of the usual conditions for free parameters and final time still hold, and the first-order, necessary optimality conditions for candidate solutions $p^{*}$ and $\Delta t^{*}$ are that 
\begin{equation}
    \partial_{p}\tilde{J}(p^{*}, \Delta t^{*}) = 0,~ \partial_{\Delta t}\tilde{J}(p^{*}, \Delta t^{*}) = 0 .
\end{equation}

Furthermore, this formulation becomes amenable to solution by numerical methods for the optimal parameters $p^{*}$ and $\Delta t^{*}$.  In such cases, the expression for $\partial_{p}\tilde{J}(p, \Delta t)$ can also be expressed as a costate-like variable $\xi : [t_{0}, t_{0} + \Delta t] \to \R^{m}$ with dynamics
\begin{align}
    & \dot{\xi}(t) = -\partial_{p}L(x(t), p, t)^{T} - \partial_{p}f(x(t), p, t)^{T}\lambda(t) \\
    &\xi(t_{0} + \Delta t) = 0 ,
\end{align}
where $\xi(\cdot)$ is defined as 
\begin{equation}
    \xi(t) = \sint{t}{t_{0} \+ \Delta t} \partial_{p}L(x(s), p, s)^{T} \+ \partial_{p}f(x(s), p, s)^{T}\lambda(s) ds .
\end{equation}
In this case, the necessary optimality condition is that 
\begin{equation}
    \xi(t_{0}) = 0 .
\end{equation}
 
\subsection{Numerical Methods}

The above expressions allow for applications of typical gradient descent methods.  Many such methods could apply, and this article presents one simple method in Alg.~\ref{alg:mpc}.  Note that this algorithm procures the decision variables over one sampling interval $[t_{0}, t_{0} + \Delta t]$.  In practice, one typically applies this algorithm repeatedly.
\begin{algorithm}[ht]
    \caption{Gradient Descent Algorithm for PMPC}
    \label{alg:mpc}
    \begin{algorithmic}[1]
        \State $k \gets 0$
        \State $p_{k} \gets \text{initial guess}$
        \State $\Delta t_{k} \gets \text{initial guess}$
        \While {$\left\| \partial_{p}\tilde{J}(p_{k}, \Delta t) \right\| + \left\| \partial_{\Delta t}\tilde{J}(p_{k}, \Delta t) \right\| > \epsilon$}
            \State Solve forward for $x(\cdot)$ from $x_{t}$ using $p_{k}$ and $\Delta t_{k}$
            \State Solve backward for $\lambda(\cdot)$, $\xi(\cdot)$ using $x(\cdot)$
            \State Compute gradients $\partial_{p}\tilde{J}(p_{k}, \Delta t)$ and $\partial_{\Delta t}\tilde{J}(p_{k}, \Delta t)$
            \State $p_{k+1} \gets p_{k} - \gamma_{1} \partial_{p}\tilde{J}(p_{k}, \Delta t)^{T}$
            \State $\Delta t_{k+1} \gets \Delta t_{k} - \gamma_{2} \partial_{\Delta t}\tilde{J}(p_{k}, \Delta t)$
            \State $k \gets k + 1$
        \EndWhile
    \end{algorithmic}
\end{algorithm}
For example, the experiments in Sec.~\ref{subsec:experimental-comparison} consecutively apply this algorithm to solve the PMPC problem.


\section{Precision vs. Maneuverability} 
\label{sec:main-results}
As already noted in Sec.~\ref{sec:system-of-interest}, the parameter $l$ is a design parameter.  This section discusses the importance and effects of selecting $l$ and proposes precision and maneuverability costs that elucidate the selection of this parameter and its impact on the differential-drive system.  These derivations influence the PMPC cost metric in Sec.~\ref{sec:numerical-results} and, for comparison, an optimal, static parameterization. 

\subsection{Precision Cost}
\label{subsec:precision-cost}

Seeking to select $l$, we initially present a cost that incorporates the degree to which the transformed system in \eqref{eq:nid} represents the original system $x_{si}$ over an arbitrary time duration $T \geq 0$.  As such, we model the precision cost by the averaged tracking error
\begin{equation}
    D_{1}(\bar{x}, x_{si}) = \dfrac{1}{T} \int_{0}^{T}\|\bar{x} - x_{si}\|~dt.
    \label{eq:precision-cost}
\end{equation}
It immediately follows from \eqref{eq:geo-rel} that $D_{1}(\bar{x}, p)$ can be directly written as a function of $l$, given by
    \begin{align}
        D_{1}(\bar{x}, x_{si}) &= \dfrac{1}{T} \int_{0}^{T}\|\bar{x} - x_{si}\|~dt \\
        &= \dfrac{1}{T} \int_{0}^{T} l ~dt =l.
        \label{eq:precision-cost-lambda}
    \end{align}
This immediate result states that the smaller $l$ is, the better the unicycle model tracks the single-integrator model.

\subsection{Maneuverability Cost} 
\label{subsec:man-cost}
In this section, we derive a geometrically-influenced maneuverability cost that models the degree to which the selection of $l$ influences the maneuverability requirements of the unicycle-modeled system, with respect to the map defined in \eqref{eq:nid}.  That is, we wish to elucidate how the parameter $l$ affects the expressions for the differential-drive agent's forward velocity, wheel difference, and exerted control effort.

To this end, we utilize the differential-drive model in \eqref{eq:differential-drive}.  Initially, note that the magnitude of the wheel-velocity difference
$
    \left| \omega_{r} - \omega_{l} \right|
$
represents a measure of the complexity of a maneuver that the differential-drive system performs.  Using this definition as guidance, we state the following proposition.
\begin{proposition}
    \label{prop:wheel-velocity-difference-upper-bound}
    Given that the control-input magnitude $\|u_{si}\|$ is upper-bounded by $\bar{v}$, the magnitude of the wheel-velocity difference, $\left| \omega_{r} - \omega_{l} \right|$, is upper bounded by 
    \begin{equation}
        \left| \omega_{r} - \omega_{l} \right| \leq \dfrac{l_{w}\bar{v}}{r_{w}\lambda}.
    \end{equation}
\end{proposition}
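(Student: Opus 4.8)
The plan is to trace the wheel-velocity difference backward through the differential-drive relation and the inverse NID map, and then close the estimate with Cauchy--Schwarz. First I would invoke the second identity in \eqref{eq:differential-drive}, namely $\omega = \tfrac{r_{w}}{l_{w}}(\omega_{r} - \omega_{l})$, and solve it for the quantity of interest to obtain $\left|\omega_{r} - \omega_{l}\right| = \tfrac{l_{w}}{r_{w}}\,|\omega|$. This reduces the claim to producing the bound $|\omega| \leq \bar{v}/l$ on the unicycle's rotational velocity, the second component of $u_{x}$ (here $\lambda$ in the statement denotes the NID parameter $l$).

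Next I would extract $\omega$ explicitly from the inverse map \eqref{eq:si-to-uni}. Since $\det R_{l}(\theta) = l(\cos^{2}\theta + \sin^{2}\theta) = l$, the inverse is
\[
    R_{l}(\theta)^{-1} =
    \begin{bmatrix}
        \cos\theta & \sin\theta \\
        -\tfrac{1}{l}\sin\theta & \tfrac{1}{l}\cos\theta
    \end{bmatrix},
\]
so that $u_{x} = R_{l}(\theta)^{-1}u_{si}$ gives the second component
\[
    \omega = \frac{1}{l}\begin{bmatrix} -\sin\theta & \cos\theta \end{bmatrix} u_{si} .
\]
The key observation is that the row vector $\begin{bmatrix} -\sin\theta & \cos\theta \end{bmatrix}$ has unit Euclidean norm for every orientation $\theta$.

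I would then apply Cauchy--Schwarz together with the hypothesis $\|u_{si}\| \leq \bar{v}$ to conclude
\[
    |\omega| = \frac{1}{l}\left| \begin{bmatrix} -\sin\theta & \cos\theta \end{bmatrix} u_{si} \right|
    \leq \frac{1}{l}\left\| \begin{bmatrix} -\sin\theta & \cos\theta \end{bmatrix} \right\| \|u_{si}\|
    \leq \frac{\bar{v}}{l} .
\]
Substituting this back into the differential-drive relation yields $\left|\omega_{r} - \omega_{l}\right| = \tfrac{l_{w}}{r_{w}}|\omega| \leq \tfrac{l_{w}\bar{v}}{r_{w}l}$, which is exactly the stated bound.

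There is no substantive obstacle here; the argument is elementary linear algebra. The only points requiring care are keeping the inverse of $R_{l}(\theta)$ correct---the $1/l$ scaling attaches only to the second row, and this is precisely what propagates into the $1/\lambda$ factor of the bound---and recognizing that the relevant row of $R_{l}(\theta)^{-1}$ is a unit vector, so that the Cauchy--Schwarz step is sharp uniformly in $\theta$ and no additional orientation-dependent slack is introduced.
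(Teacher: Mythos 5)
Your proof is correct and follows the same skeleton as the paper's: reduce $|\omega_{r}-\omega_{l}|$ to $\tfrac{l_{w}}{r_{w}}|\omega|$ via \eqref{eq:differential-drive}, extract $\omega$ from the inverse NID map \eqref{eq:si-to-uni}, and bound the resulting projection of $u_{si}$ by its norm. The difference is only in how that projection is bounded. The paper factors $R_{l}(\theta)^{-1} = T^{-1}R(-\theta)$ and writes $u_{si} = R(\theta_{si})\bar{u}_{si}$ in polar form, so that after composing rotations it obtains the exact identity $\omega_{r}-\omega_{l} = \tfrac{l_{w}\|u_{si}\|}{r_{w}l}\sin(\theta_{si}-\theta)$ and then uses $|\sin(\cdot)|\leq 1$; you instead invert $R_{l}(\theta)$ directly, observe that its second row is $\tfrac{1}{l}$ times a unit vector, and apply Cauchy--Schwarz. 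These are the same inequality in different clothing---your unit-row inner product is precisely $\tfrac{\|u_{si}\|}{l}\sin(\theta_{si}-\theta)$---but each presentation buys something slightly different: the paper's polar decomposition makes explicit that the wheel difference is governed by the misalignment angle between the commanded single-integrator direction and the robot's heading (so the bound is attained exactly when the command is perpendicular to the heading), while your route is more economical, skipping the auxiliary quantities $\theta_{si}$, $\bar{u}_{si}$, $T^{-1}$ entirely, and, as you note, is equally sharp. You also correctly read the statement's $\lambda$ as a typo for the NID parameter $l$.
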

\begin{proof}
    Let 
    \begin{equation}
        e_{2} = 
        \begin{bmatrix}
            0 & 1
        \end{bmatrix}^{T}
        ,~ T^{-1} = 
        \begin{bmatrix}
            1 & 0 \\
            0 & \dfrac{1}{l}
        \end{bmatrix}
        ,~ \bar{u}_{si} =
        \begin{bmatrix}
            \|u_{si}\| & 0
        \end{bmatrix}^{T}
    \end{equation}
    and let $\theta_{si}$ be the angle of the vector $u_{si}$.  From \eqref{eq:diffeomorphism},\eqref{eq:differential-drive} we can retrieve the magnitude of the difference in angular velocities, $\left| \omega_{r} - \omega_{l} \right|$, as
    \begin{align}
        &\left| \omega_{r} - \omega_{l} \right| = \left| \dfrac{l_{w}}{r_{w}}\omega \right| \\
        &= \left| \dfrac{l}{r} e_{2}^{T}R_{l}(\theta)^{-1}u_{si} \right| \\
        &= 
            \left|
            \dfrac{l_{w}}{r_{w}}
            e_{2}^{T}
            T^{-1}
            R(-\theta) 
            R(\theta_{si}) 
            \bar{u}_{si} 
            \right| \\
        &= 
            \left|
            \dfrac{l_{w}}{r_{w}}
            e_{2}^{T}
            T^{-1}
            R(\theta_{si}-\theta)
            \bar{u}_{si} 
            \right| \\
        &= 
            \left|
            \dfrac{l_{w}}{r_{w}}
            \begin{bmatrix}
                0 & \dfrac{1}{l}
            \end{bmatrix} 
            \begin{bmatrix}
                \cos(\theta_{si} - \theta) & -\sin(\theta_{si} - \theta) \\ 
                \sin(\theta_{si} - \theta) & \cos(\theta_{si} - \theta)
            \end{bmatrix}
            \begin{bmatrix}
                \|u_{si}\| \\ 
                0
            \end{bmatrix} 
            \right| \\
        &= 
            \left|
            \dfrac{l_{w}}{r_{w}}
            \begin{bmatrix}
                \dfrac{\sin(\theta_{si} - \theta)}{l} &  \dfrac{\cos(\theta_{si} - \theta)}{l}
            \end{bmatrix}
            \begin{bmatrix}
                \|u_{si}\| \\ 
                0
            \end{bmatrix} 
            \right|\\
        &= \left| \dfrac{l_{w}\|u_{si}\|}{r_{w}l}\sin(\theta_{si} - \theta) \right| \\ 
        &\leq \dfrac{l_{w}\bar{v}}{r_{w} l}.
    \end{align}
\end{proof}
Thus, Prop.~\ref{prop:wheel-velocity-difference-upper-bound} yields an upper bound on the magnitude of the wheel-velocity difference
\begin{equation} 
     \left| \omega_{r} - \omega_{l} \right| \leq \dfrac{l_{w}\bar{v}}{r_{w} l}.
     \label{eq:wheel-difference-upper-bound}
\end{equation}
Prop.~\ref{prop:linear-velocity} shows a similar result for the forward velocity of the differential-drive agent.

\begin{proposition}
    Given that the control-input magnitude $\|u_{si}\|$ is upper-bounded by $\bar{v}$, the magnitude of the forward velocity, $\left| \omega_{r} + \omega_{l} \right|$, is upper bounded by 
    \begin{equation}
        \left| \omega_{r} + \omega_{l} \right| \leq \dfrac{2\bar{v}}{r_{w}}.
        \label{eq:forward-velocity-upper-bound}
    \end{equation} 
    \label{prop:linear-velocity}
\end{proposition}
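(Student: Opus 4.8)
The plan is to follow the template of the proof of Prop.~\ref{prop:wheel-velocity-difference-upper-bound}, but to extract the forward velocity $v$ rather than the rotational velocity $\omega$. First I would invert the differential-drive relation in \eqref{eq:differential-drive}: since $v = \frac{r_{w}}{2}(\omega_{r} + \omega_{l})$, it follows that $\omega_{r} + \omega_{l} = \frac{2v}{r_{w}}$, so $\left| \omega_{r} + \omega_{l} \right| = \frac{2}{r_{w}}|v|$. The task therefore reduces to bounding $|v|$ in terms of $\bar{v}$, completely in parallel with how the previous proposition reduced to bounding $|\omega|$.

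To bound $|v|$, I would substitute the control law \eqref{eq:si-to-uni}, namely $u_{x} = R_{l}(\theta)^{-1}u_{si}$, and read off the first component as $v = e_{1}^{T}R_{l}(\theta)^{-1}u_{si}$. As in the earlier proof, I factor $R_{l}(\theta) = R(\theta)T$ with $T = \operatorname{diag}(1,\, l)$, so that $R_{l}(\theta)^{-1} = T^{-1}R(-\theta)$, and I write $u_{si} = R(\theta_{si})\bar{u}_{si}$, where $\bar{u}_{si} = \begin{bmatrix} \|u_{si}\| & 0 \end{bmatrix}^{T}$ and $\theta_{si}$ is the angle of $u_{si}$. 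Collapsing the rotations via $R(-\theta)R(\theta_{si}) = R(\theta_{si} - \theta)$ then yields $v = e_{1}^{T}T^{-1}R(\theta_{si} - \theta)\bar{u}_{si}$.

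The structurally important observation---and the reason the bound is independent of $l$---is that $e_{1}^{T}T^{-1} = e_{1}^{T}$, since $T^{-1}$ rescales only the second coordinate by $\frac{1}{l}$. This is exactly the contrast with Prop.~\ref{prop:wheel-velocity-difference-upper-bound}, where the relevant row was $e_{2}^{T}T^{-1} = \frac{1}{l}e_{2}^{T}$ and the $\frac{1}{l}$ factor survived into the final estimate. Carrying out the remaining matrix product gives $v = \|u_{si}\|\cos(\theta_{si} - \theta)$, and the claim then follows immediately from $\left|\cos(\theta_{si} - \theta)\right| \leq 1$ together with the hypothesis $\|u_{si}\| \leq \bar{v}$, so that $\left| \omega_{r} + \omega_{l} \right| = \frac{2}{r_{w}}\|u_{si}\|\left|\cos(\theta_{si} - \theta)\right| \leq \frac{2\bar{v}}{r_{w}}$.

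I do not anticipate a genuine technical obstacle here: once the factorization $R_{l}(\theta) = R(\theta)T$ from the previous proof is in place, the only care required is the bookkeeping in the matrix product and the recognition that the $\frac{1}{l}$ factor drops out of the first row. If there is a ``hard part,'' it is conceptual rather than computational---namely, noting that the maneuverability requirement in the forward direction is insensitive to the abstraction parameter $l$, whereas the turning effort captured by the wheel-velocity difference in \eqref{eq:wheel-difference-upper-bound} blows up as $l \to 0$.
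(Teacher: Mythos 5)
Your proposal is correct and follows essentially the same route as the paper's own proof: reduce $\left|\omega_{r}+\omega_{l}\right|$ to $\tfrac{2}{r_{w}}|v|$ via \eqref{eq:differential-drive}, factor $R_{l}(\theta)^{-1} = T^{-1}R(-\theta)$, write $u_{si} = R(\theta_{si})\bar{u}_{si}$, collapse the rotations to $R(\theta_{si}-\theta)$, and bound $\|u_{si}\|\left|\cos(\theta_{si}-\theta)\right| \leq \bar{v}$. Your explicit remark that $e_{1}^{T}T^{-1} = e_{1}^{T}$ (so the $\tfrac{1}{l}$ factor never enters) is exactly the step the paper performs implicitly when passing from $e_{1}^{T}T^{-1}R(-\theta)R(\theta_{si})\bar{u}_{si}$ to $\left[1~~0\right]R(\theta_{si}-\theta)\bar{u}_{si}$.
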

\begin{proof}
    \allowdisplaybreaks
    Let 
    \begin{equation}
        e_{1} = 
        \begin{bmatrix}
            1 & 0
        \end{bmatrix}^{T}
    \end{equation} 
    and 
    \begin{equation} 
        T^{-1}, \bar{u}_{si}, \theta_{si}
    \end{equation}
    be defined as in the proof of Prop.~\ref{prop:wheel-velocity-difference-upper-bound}.  Then, we have, through \eqref{eq:differential-drive}, that
    \begin{align}
        \left| \omega_{r} + \omega_{l} \right| &= \left| \dfrac{2}{r_{w}}v \right| \\ 
        &= \left| \dfrac{2}{r_{w}}e_{1}^{T}R_{l}(\theta)^{-1}u_{si} \right| \\
        &= \left| \dfrac{2}{r_{w}}e_{1}^{T}T^{-1}R(-\theta)R(\theta_{si})\bar{u}_{si} \right| \\ 
        &= \left| \dfrac{2}{r_{w}}\left[ 1 ~ 0 \right] R(\theta_{si} - \theta)\bar{u}_{si} \right| \\ 
        &= \left| \dfrac{2}{r_{w}}\left[ \cos(\theta_{si} - \theta) ~ -\sin(\theta_{si} - \theta) \right] 
        \begin{bmatrix}
            \|u_{si}\| \\ 
            0
        \end{bmatrix} 
        \right| \\ 
        &= \left| \dfrac{2}{r_{w}} \|u_{si}\|\cos(\theta_{si} - \theta) \right| \\
        &\leq \dfrac{2\bar{v}}{r_{w}}.
    \end{align}
\end{proof}
So Prop.~\ref{prop:linear-velocity} reveals that the forward velocity of the differential-drive system remains independent of the selection of the parameter $l$.  

To elucidate an appropriate maneuverability cost in terms of $l$, define the average control effort exerted by the differential-drive system over an arbitrary time duration $T >= 0$ as 
\begin{equation}
   \dfrac{1}{T}\int_{0}^{T} \left| \omega_{r} - \omega_{l} \right| + \left| \omega_{r} + \omega_{l} \right| dt .
\end{equation}
Directly applying Props.~\ref{prop:wheel-velocity-difference-upper-bound},\ref{prop:linear-velocity} reveals that the above expression is bounded above by
\begin{equation}
    \dfrac{l_{w}\bar{v}}{r_{w}l} + \dfrac{2\bar{v}}{r} .
    \label{eq:control-energy-bound}
\end{equation}

The expression in \eqref{eq:control-energy-bound} demonstrates an interesting quality of the system.  As $l$ grows large, the forward velocity dominates the control effort exerted by the differential-drive system.  However, if $l$ becomes small, then the choice of $l$ affects the potentially exerted control effort.

Thus, \eqref{eq:control-energy-bound} reveals how $l$ affects the maneuverability requirements imposed by the abstraction.  The fact that we always pay the forward-velocity price, regardless of the selection of $l$, naturally excludes the forward velocity from the soon-to-be-formulated cost, because any selection of $l$ results in the same cost bound; but the choice of $l$ directly affects the cost associated with the wheel difference.  Accordingly, the wheel difference must play a role in the final PMPC cost metric.  With this conclusion in mind, we define the static maneuverability cost as
\begin{equation}
    D_{2}(l) = \dfrac{l_{w}\bar{v}}{r_{w} l} ,
    \label{eq:maneuverability-cost}
\end{equation}
which the static parameterization in the following section utilizes.

\subsection{An Optimal, One-Time Selection}
\label{subsec:an-optimal-one-time-selection}

Sec.~\ref{sec:numerical-results} utilizes the results in Sec.~\ref{sec:main-results} to formulate an appropriate cost metric for a PMPC program.  To have a baseline comparison, this section formulates an optimal, one-time selection for the parameter $l$.  That is, the selection occurs once over the experiment's duration.  This selection should strike a balance between precision and maneuverability.  Eqns.~\eqref{eq:precision-cost-lambda} and \eqref{eq:maneuverability-cost} represent each of these facets, respectively, and introduce an inherent trade-off in selecting $l$.  Making $l$ smaller directly reduces the cost in \eqref{eq:precision-cost-lambda}.  However, consider the relationship in \eqref{eq:maneuverability-cost}; as $l$ decreases, the differential-drive system accumulates a higher maneuverability cost.  

As such, the convex combination of \eqref{eq:precision-cost} and \eqref{eq:maneuverability-cost} yields a precision and maneuverability cost in terms of $l$ as 
\begin{align}
    D(l) &= \alpha D_{1}(l) + (1 - \alpha)D_{2}(l) \nonumber \\
    &= \alpha l + (1 - \alpha)\dfrac{l_{w}\bar{v}}{r_{w}l},
    \label{eq:total-cost}
\end{align}
where $\alpha \in (0, 1)$.  Now, we seek the optimal $l$ such that \eqref{eq:total-cost} is minimized.  That is, 

\begin{equation}
    l^{*} = \argmin_{l}~D(l).
    \label{eq:minimization}
\end{equation}
\eqref{eq:minimization} leads to Prop.~\ref{prop:cost-minimization}.
\begin{proposition}
    \label{prop:cost-minimization}
    The optimal $l^{*}$ is given by 
    \begin{equation}
        l^{*} = \sqrt{\dfrac{1 - \alpha}{\alpha}\dfrac{l_{w}\bar{v}}{r_{w}}}.
        \label{eq:optimal-lambda}
    \end{equation}
\end{proposition}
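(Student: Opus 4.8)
The plan is to treat \eqref{eq:minimization} as a routine single-variable, unconstrained minimization of $D(l)$ over the open interval $l \in (0, \infty)$ dictated by \eqref{eq:nid}, and to solve it by the first-derivative test. First I would differentiate the cost in \eqref{eq:total-cost} term-by-term: the linear term $\alpha l$ contributes $\alpha$, and the reciprocal term $(1-\alpha)\frac{l_{w}\bar{v}}{r_{w}l}$ contributes $-(1-\alpha)\frac{l_{w}\bar{v}}{r_{w}l^{2}}$, so that
\begin{equation}
    D'(l) = \alpha - (1-\alpha)\dfrac{l_{w}\bar{v}}{r_{w}l^{2}}.
\end{equation}

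Next I would impose the first-order stationarity condition $D'(l) = 0$ and solve the resulting algebraic equation for $l^{2}$, which rearranges immediately to $l^{2} = \frac{1-\alpha}{\alpha}\frac{l_{w}\bar{v}}{r_{w}}$. Because the physical parameters $l_{w}, \bar{v}, r_{w}$ are strictly positive and $\alpha \in (0,1)$, the right-hand side is strictly positive, so the equation admits exactly one positive root. I would select it, discarding the spurious negative root, since $l$ is confined to $(0,\infty)$; taking the square root then yields the claimed expression \eqref{eq:optimal-lambda} for $l^{*}$.

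Finally, I would confirm that this critical point is a genuine minimizer rather than a maximizer or inflection by examining the second derivative,
\begin{equation}
    D''(l) = 2(1-\alpha)\dfrac{l_{w}\bar{v}}{r_{w}l^{3}},
\end{equation}
which is strictly positive for every $l > 0$ under the standing positivity assumptions. Hence $D$ is strictly convex on $(0,\infty)$, so its unique stationary point is necessarily the global minimizer, establishing optimality of $l^{*}$.

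Honestly, there is no substantial obstacle here: this is a textbook calculus argument. The only points demanding care are enforcing the domain constraint $l > 0$ when picking the positive root, and verifying the second-order (or convexity) condition so that the stationary point is certified as the minimum rather than merely a critical value.
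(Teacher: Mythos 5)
Your proposal is correct and follows essentially the same route as the paper: differentiate $D(l) = \alpha l + (1-\alpha)\frac{l_{w}\bar{v}}{r_{w}l}$, set the derivative to zero, and solve for the positive root. In fact your argument is slightly more complete than the paper's, which stops at the stationarity condition; your second-derivative check (strict convexity of $D$ on $(0,\infty)$) is what actually certifies that the critical point is the global minimizer rather than merely a stationary point.
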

\begin{proof}
    We have that
    \begin{align}
        \dfrac{\partial}{\partial l} D(l) &= \alpha - (1 - \alpha)\dfrac{l_{w}\bar{v}}{r_{w}l^{2}} \\
        &= \alpha - (1 - \alpha)\dfrac{l_{w}\bar{v}}{r_{w}l^{2}}.
    \end{align}  
    Setting this equation equal to zero directly yields the minimizer
    \begin{equation}
        \label{eq:optimal_static_l}
        l^{*} = \sqrt{\dfrac{1 - \alpha}{\alpha}\dfrac{l_{w}\bar{v}}{r_{w}}}.
    \end{equation}
    \end{proof}
    
Note that the above result utilizes \eqref{eq:maneuverability-cost}, which is an upper bound on the wheel velocity difference.  Thus, the PMPC method should outperform this static selection, a suspicion that Sec.~\ref{sec:numerical-results} investigates.

\subsection{PMPC Cost}

This section formulates a PMPC cost based on the analysis in Sec.~\ref{sec:main-results}.  To increase precision, the parameter $l$ must be minimized.  However, \eqref{eq:maneuverability-cost} in Sec.~\ref{subsec:man-cost} indicates that the wheel velocity difference must be managed.  Thus, precision and maneuverability are balanced with the cost
\begin{align}
    &L(x, l, t) = (1 - \beta)(\omega_{r} - \omega_{l})^{2} + \beta l^{2} \\
    &=  (1 - \beta)((l_{w}/r_{w})e_{2}R_{l}(\theta)^{-1}(u_{si}))^{2} + \beta l^{2} ,
\end{align}
where $\beta \in (0, 1)$.

With this cost metric, the PMPC program becomes
\begin{align}
    & \argmin_{l \in \R, \Delta t \in \R_{\geq 0}} \sint{t_{0}}{t_{0} \+ \Delta t} (1 - \beta)((l_{w}/r_{w})e_{2}R_{l}(\theta)^{-1}(u_{si}))^{2} + \beta l^{2} \\
    & \qquad\qquad\quad + C(\Delta t) \\
    \label{eq:pmpc_cost}
    & \text{s.t. } \dot{x} = 
    \begin{bmatrix}
        R(\theta)e_{1} & 0 \\
        \mathbf{0} & 1
    \end{bmatrix} 
    R_{l}(\theta)^{-1}u_{si} \\
    & \quad~ x(t_{0}) = x_{t_{0}} ,
\end{align}
Note that the sampling cost $C(\Delta t)$ and single-integrator control input $u_{si}$ have yet to be specified.

\section{Numerical Results} 
\label{sec:numerical-results}

\begin{figure}[tbp]
    \centering
    \includegraphics[width=0.48\textwidth]{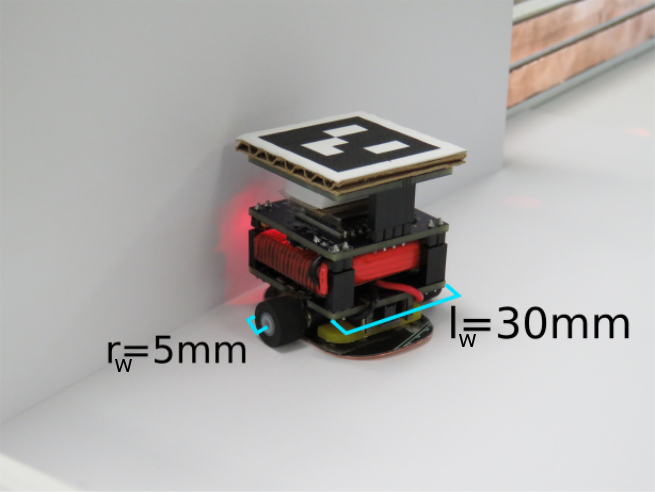}
    \caption{The GRITSbot, which is a small, differential-drive mobile robot used in the Robotarium.  This figure displays the base length and wheel radius of the GRITSbots.}
    \label{fig:gritsbot}
\end{figure}

To demonstrate the findings in Sec.~\ref{sec:main-results}, we conduct two separate tests: in simulation and on real hardware.  The simulation portion shows the effects of a one-time parameter selection on the angular velocity versus the PMPC method.  The experimental section contains the same implementation on a real, physical system: the Robotarium (\url{www.robotarium.org}).  In particular, the experimental results highlight the practical differences between using a PMPC approach and a one-time selection.

\subsection{Experiment Setup}
\label{subsec:experiment_setup}

\begin{figure}[bp]
    \centering
    \includegraphics[width=0.235\textwidth]{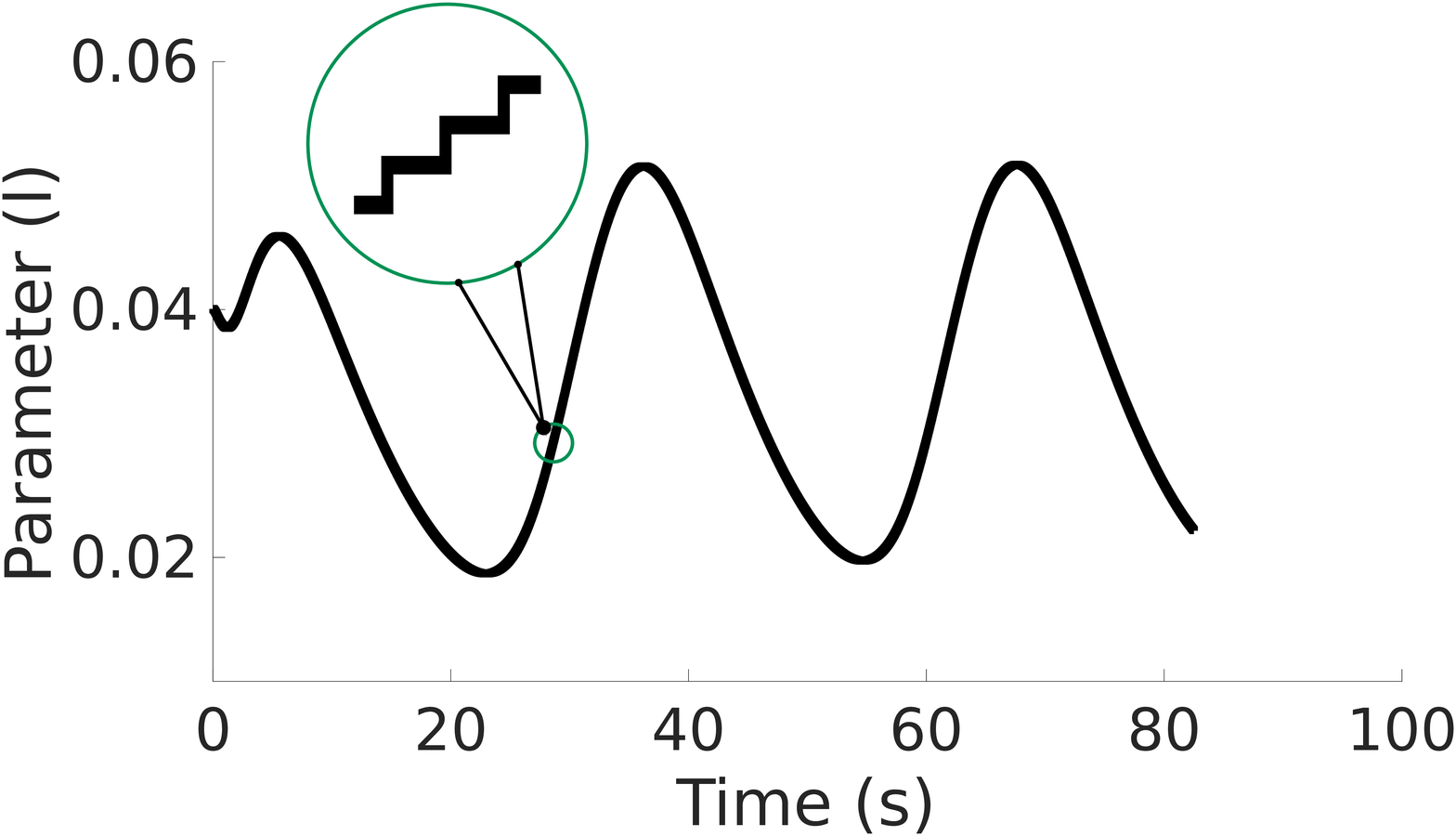}%
    ~\includegraphics[width=0.235\textwidth]{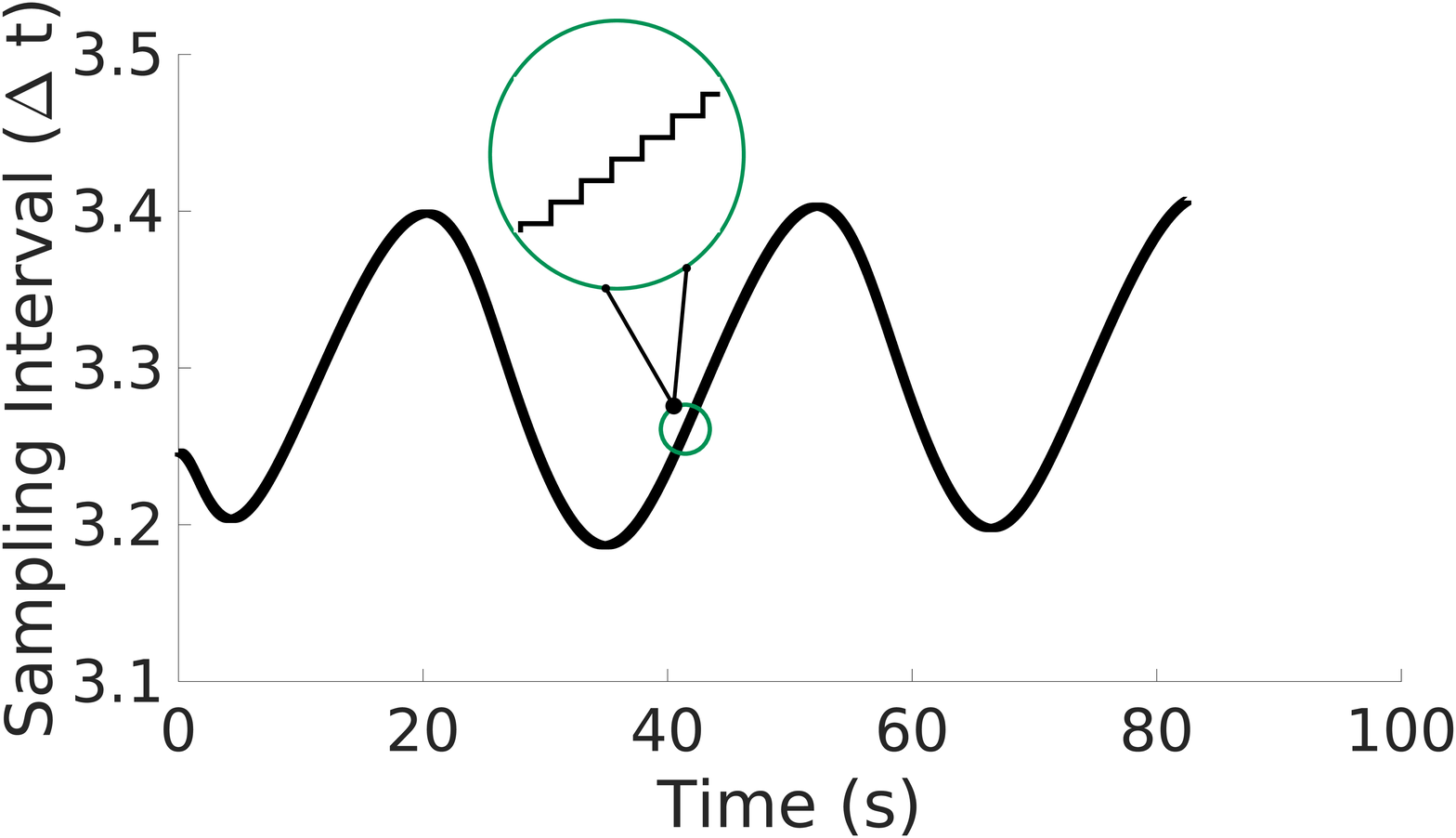}
    \caption{Parameter (left) and sampling horizon (right) from PMPC simulation, which oscillate because of the ellipsoidal reference trajectory in \eqref{eq:reference}.  Due to the sharp maneuvers required, the time horizon shortens and the parameter increases on the left and right sides of the ellipse.  On flatter regions, the PMPC reduces the parameter and increases the sampling time.  The zoomed portion displays the discrete nature of the PMPC solution.}
    \label{fig:parameter_interval_sim}
\end{figure}

\begin{figure}[tbp]
    \centering
    \includegraphics[width=0.48\textwidth]{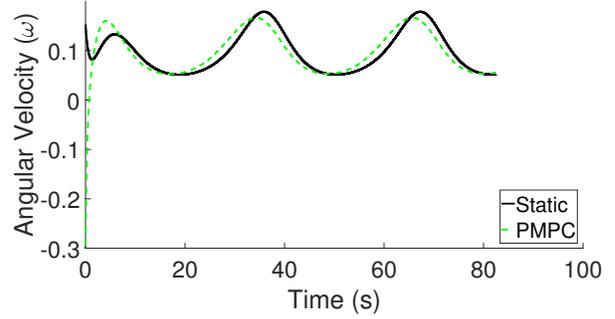}
    \caption{Angular velocity ($\omega$) during the simulation.  The simulation shows that the static selection (solid line) and PMPC method (dashed line) both generate similar angular velocity values.}
    \label{fig:omega_sim}
\end{figure}

This section proposes cost functions based on the results in Sec.~\ref{sec:main-results} and expresses the PMPC problem to be solved in simulation and on the Robotarium.  Furthermore, this section also statically parameterizes the NID to provide a baseline comparison to the PMPC strategy.  In this case, the particular setup involves a mobile robot tracking an ellipsoidal reference signal

\begin{figure*}[tp]
    \centering
    \begin{subfigure}[t]{0.32\textwidth}
        \includegraphics[width=\textwidth]{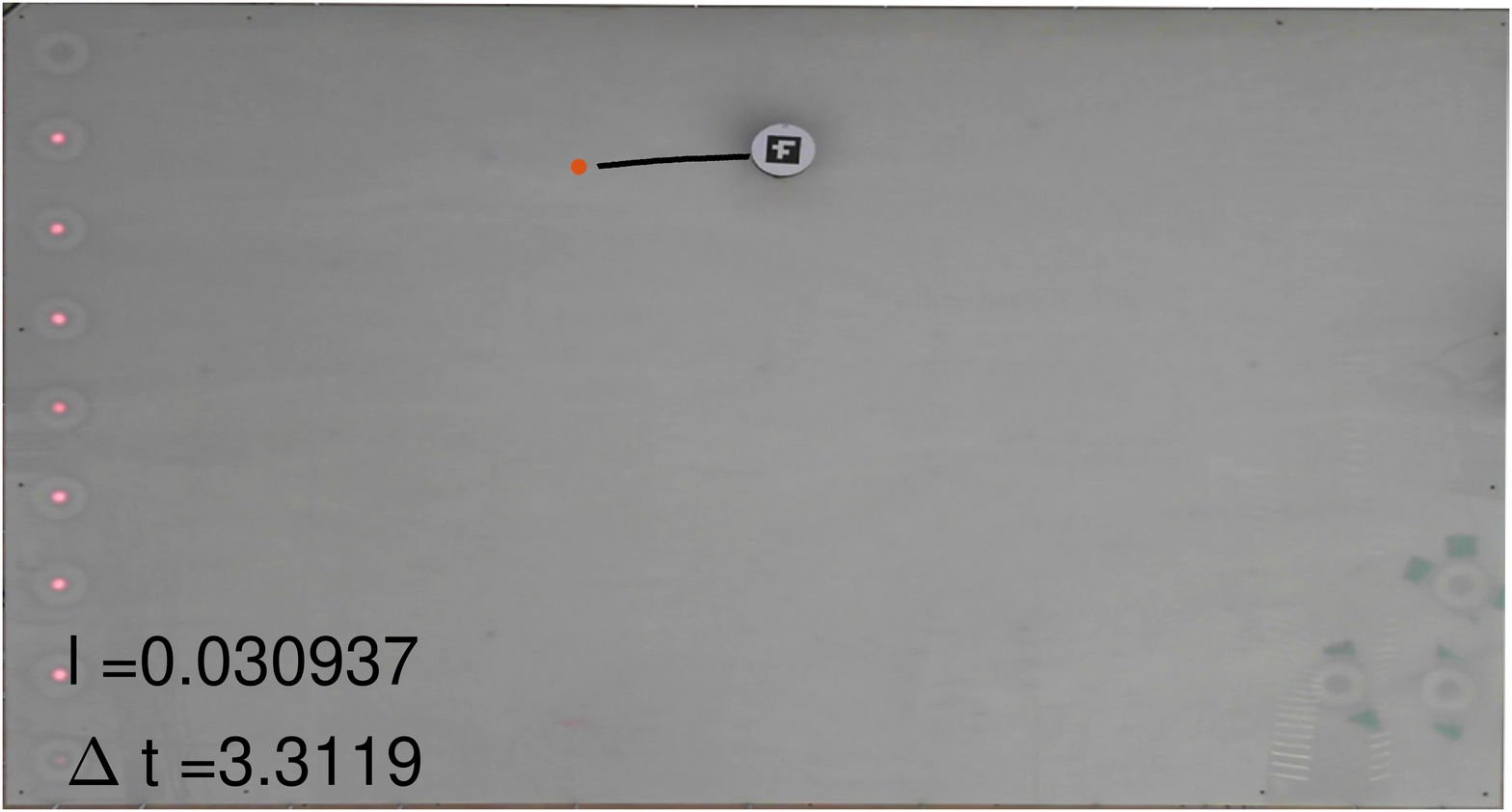}
    \end{subfigure}
    \begin{subfigure}[t]{0.32\textwidth}
        \includegraphics[width=\textwidth]{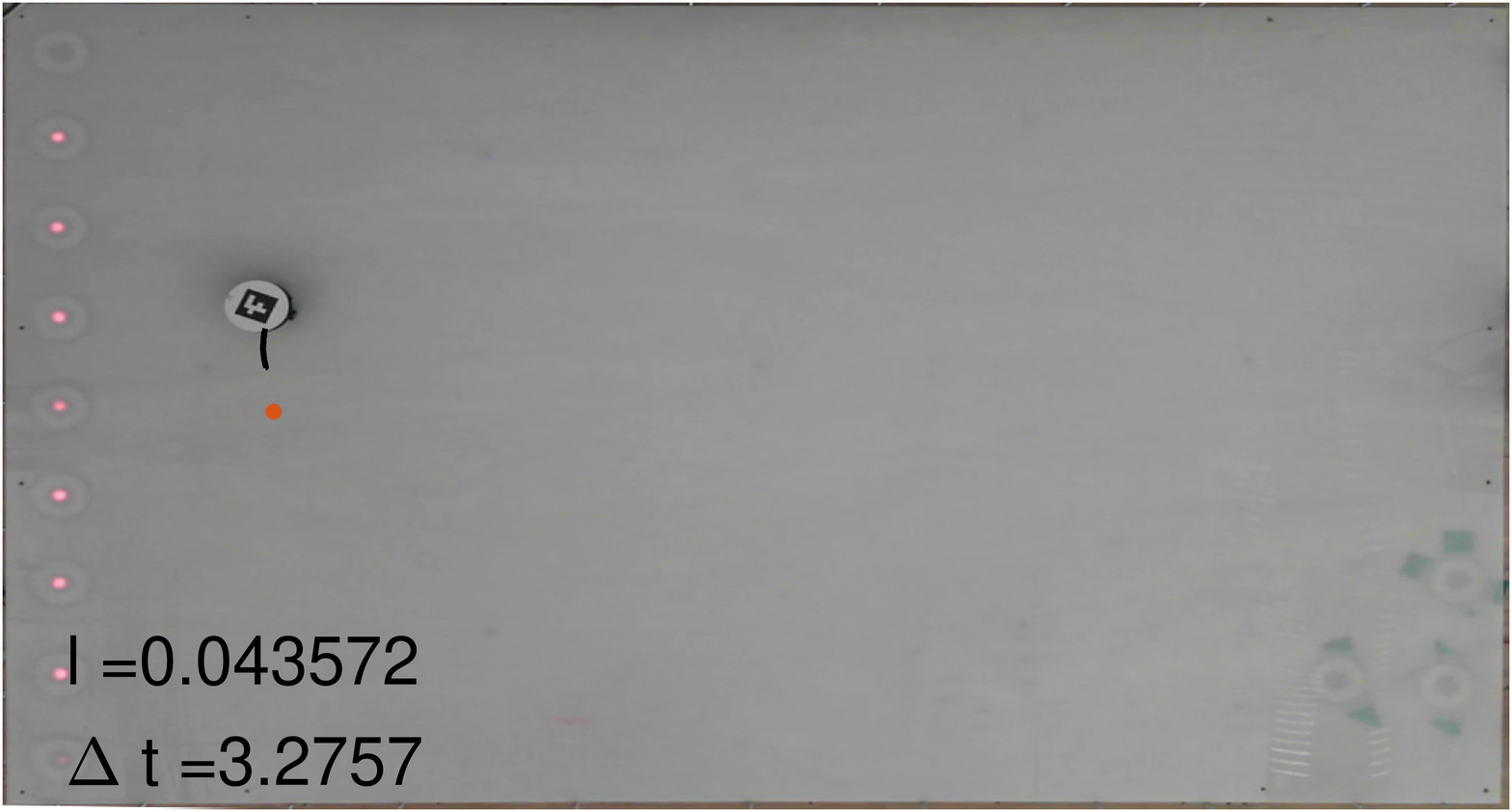}
    \end{subfigure}
    \begin{subfigure}[t]{0.32\textwidth}
        \includegraphics[width=\textwidth]{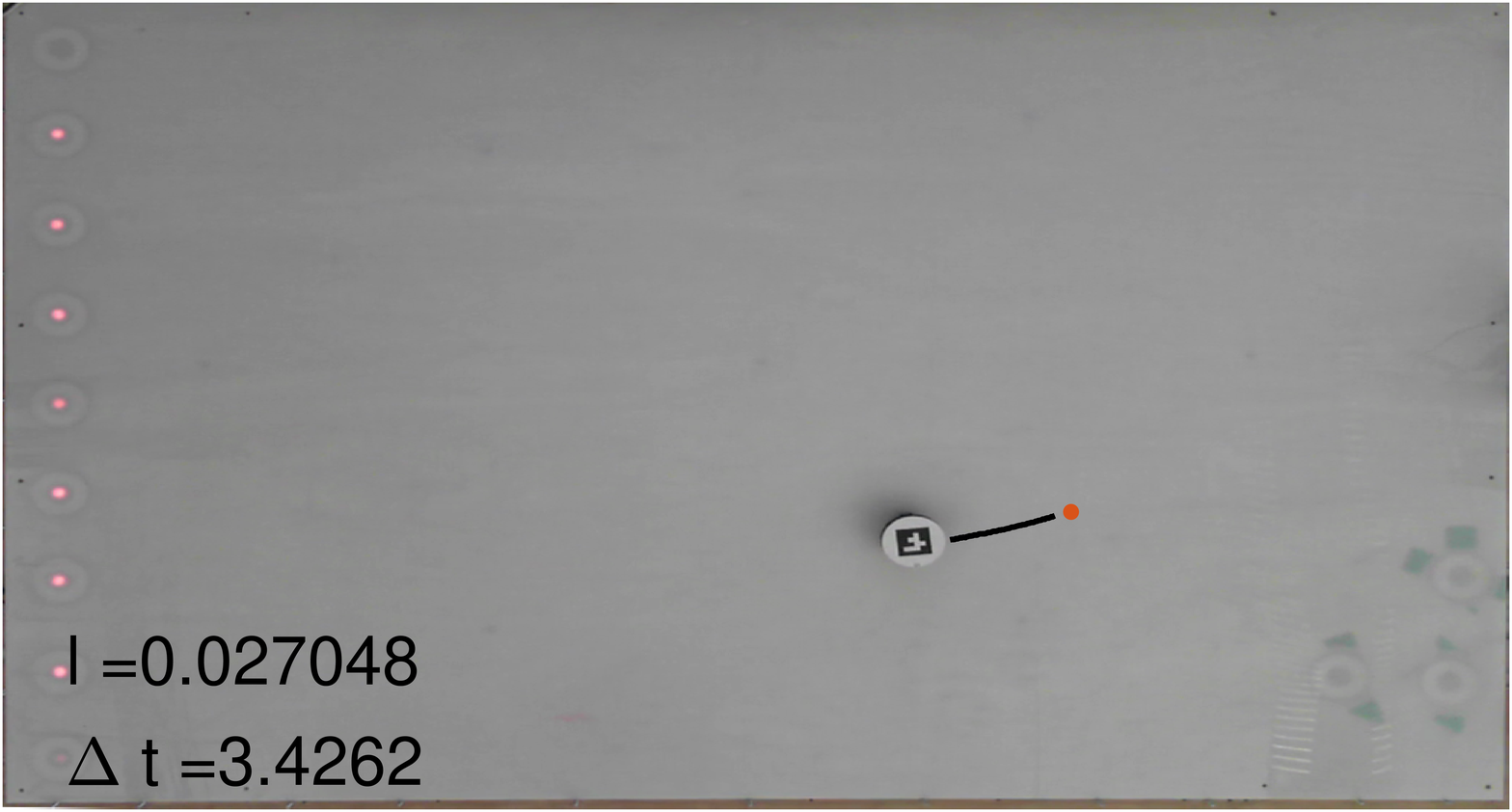}
    \end{subfigure}
    \caption{Robot during the PMPC experiment.  This figure shows that the parameter grows and sampling horizon shrinks when the robot must perform more complex maneuvers (i.e., on the left and right sides of the ellipse).  Over the flatter portions of the ellipse, the parameter increases and sampling horizon (solid line) reduces, allowing the robot to track the reference (solid circle) more closely.}
    \label{fig:picture-set-one}
\end{figure*}

\begin{equation}
    \label{eq:reference}
    r(t) = 
    \begin{bmatrix}
        0.4\cos((1/10)t) \\ 
        0.2\sin((1/10)t) 
    \end{bmatrix} .
\end{equation}

For a single-integrator system, the controller 
\begin{equation}
    u_{si} = x_{si} - r + \dot{r} 
\end{equation}
drives the single-integrator system to the reference exponentially quickly.  Utilizing the transformation in Sec.~\ref{sec:main-results} yields the controller 
\begin{align}
    u_{x} &= R_{l}(\theta)^{-1}(r - x_{si} + \dot{r}) \\
    &= R_{l}(\theta)^{-1}(r - (\bar{x} + l R(\theta)e_{1}) + \dot{r}) .
\end{align}
The GRITSbots of the Robotarium (shown in Fig.~\ref{fig:gritsbot}) have a wheel radius and base length of 
\begin{equation}
    r_{w} = 0.005~m,~ l_{w} = 0.03~m.
\end{equation}
Furthermore, their maximum forward velocity is 
\begin{equation}
    \bar{v} = 0.1~m/s .
\end{equation}
For this problem, we also consider the sampling cost
\begin{equation}
    C(\Delta t) = \dfrac{1}{\Delta t},
\end{equation}
which prevents the time horizon from becoming too small (i.e., the cost penalizes small time horizons). 

Substituting these values into \eqref{eq:pmpc_cost}, the particular PMPC problem to be solved is 
\begin{align}
    & \argmin_{l \in \R, \Delta t \in \R_{\geq 0}} \sint{t_{0}}{t_{0} \+ \Delta t}  (\beta - 1)((l_{w}/r_{w})e_{2}R_{l}(\theta)^{-1}(r - x_{si} \+ \dot{r}))^{2} \\
    & \qquad\qquad\quad + \beta l^{2} ds + (1/\Delta t) \\
    & \text{s.t. } \dot{x} = 
    \begin{bmatrix}
        R(\theta)e_{1} & 0 \\
        \mathbf{0} & 1
    \end{bmatrix} 
    R_{l}(\theta)^{-1}(r - x_{si} + \dot{r}) \\
    & \quad~ x(t_{0}) = x_{t_{0}} ,
\end{align}
where $l$ and $\Delta t$ are the decision variables and $\beta = 0.01$.  Both simulation and experimental results utilize Alg.~\ref{alg:mpc} to solve for the optimal parameters and time horizon online with the step-size values 
\begin{equation}
    \gamma_{1} = 0.001,~ \gamma_{2} = 0.01 .
\end{equation}

Each experiment initially executes Alg.~\ref{alg:mpc} to termination; then, steps are performed each iteration to ensure that the current values stays close to the locally optimal solution realized by Alg.~\ref{alg:mpc}.  In particular, each iteration takes $0.033~s$, which is the Robotarium's sampling interval. 

\begin{figure}[tp]
    \centering
    \includegraphics[width=0.48\textwidth]{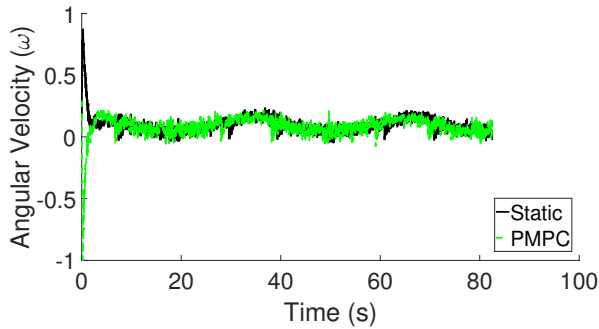}
    \caption{Angular velocity of robots for PMPC method (dashed line) versus static parameterization (solid line).  In this case, both methods generate similar angular velocities, but the PMPC method produces better tracking.}
    \label{fig:omega_exp}
\end{figure}

\begin{figure}[bp]
    \centering
    \includegraphics[width=0.235\textwidth]{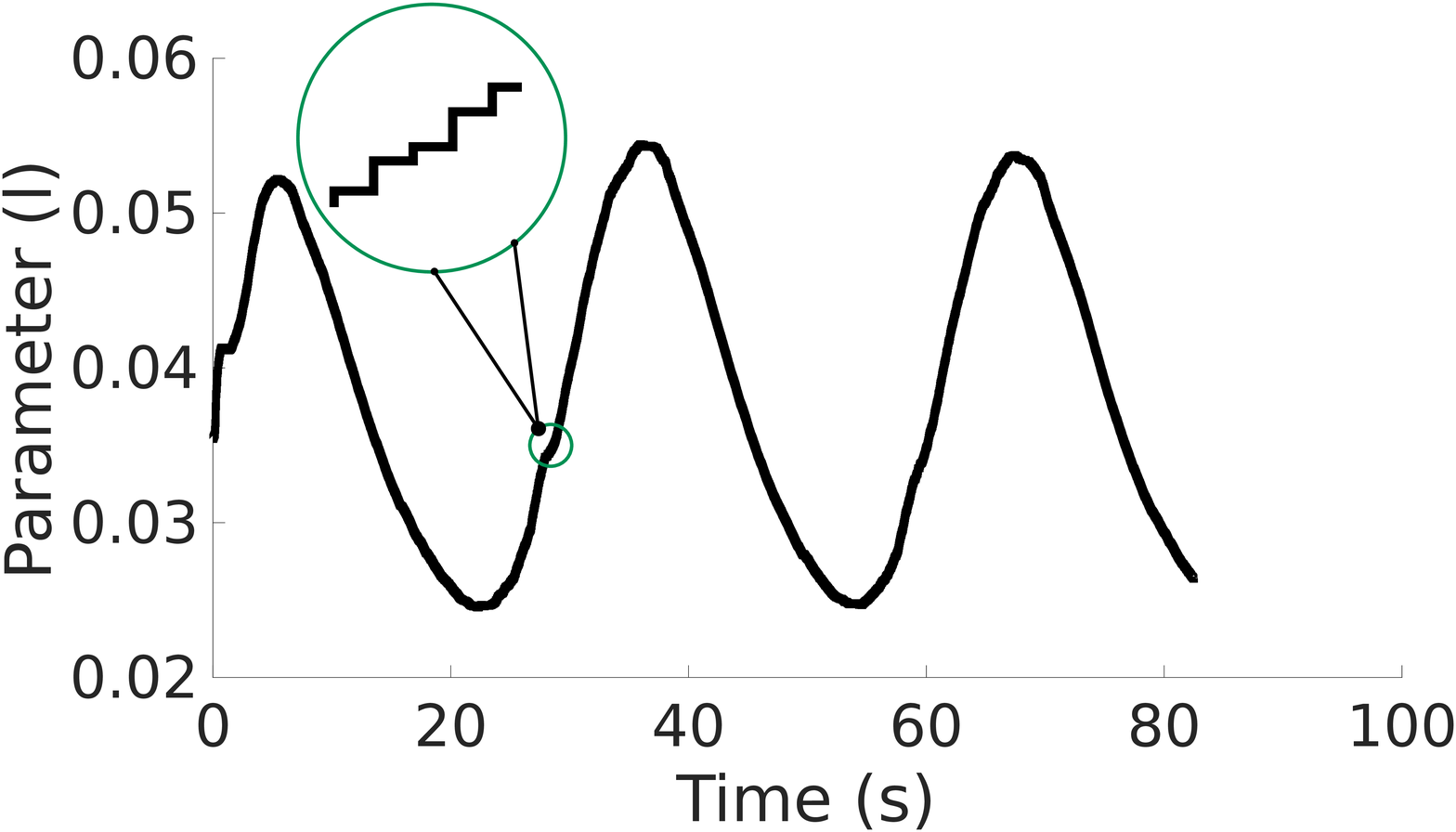}%
    ~\includegraphics[width=0.235\textwidth]{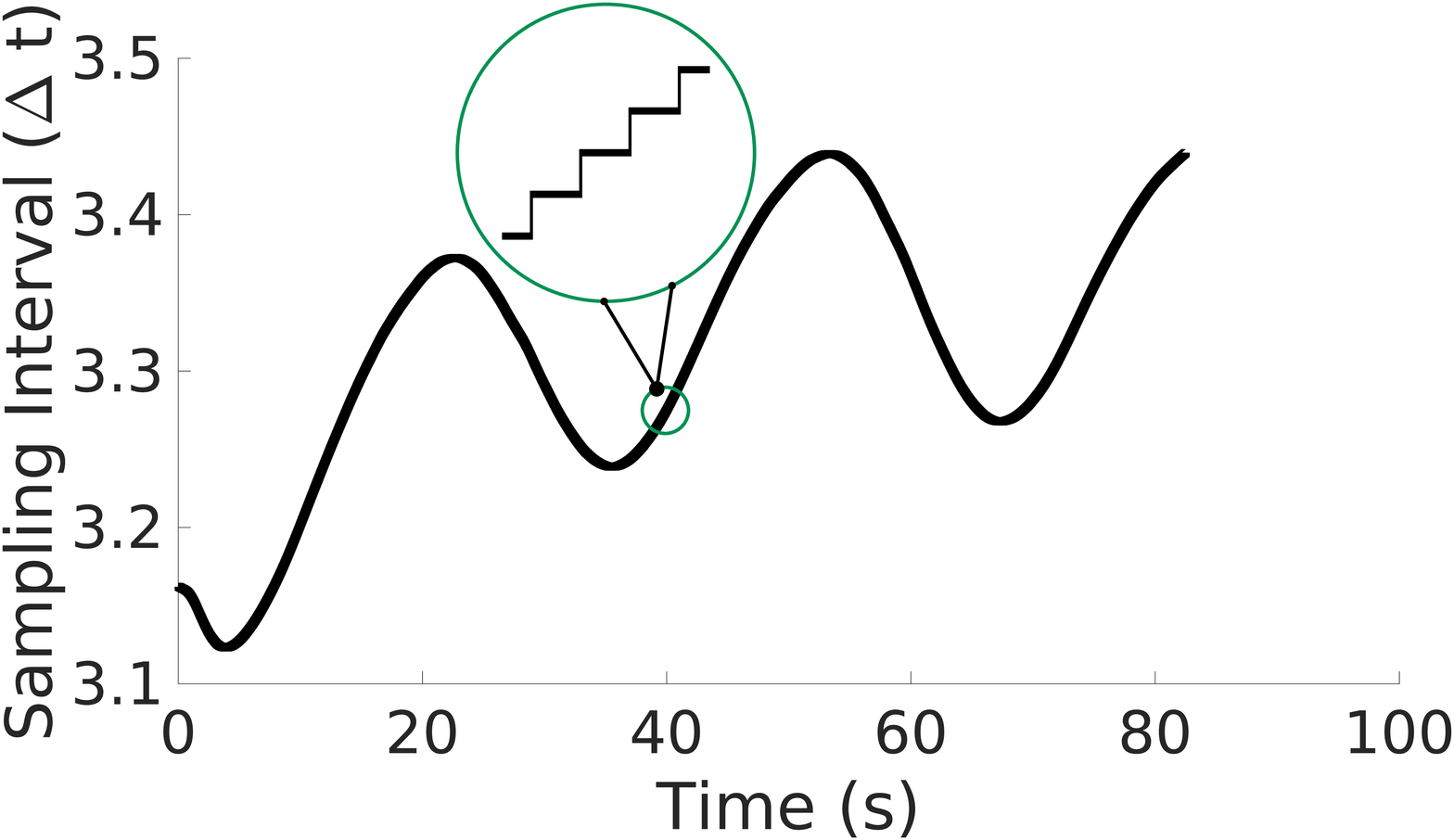}
    \caption{Parameter (left) and sampling horizon (right) from PMPC experiment on the Robotarium.  The PMPC program reduces the sampling horizon and increases the parameter to cope with the sharp maneuvers required at the left and right sides of the ellipse.  On flatter regions, the PMPC decreases the parameter and increases the time horizon, providing better reference tracking.  The zoomed portion illustrates the discrete nature of the PMPC solution.}
    \label{fig:parameter_interval_exp}
\end{figure}

For comparison, the one-time selection method stems directly from the abstraction cost formulated in Sec.~\ref{subsec:an-optimal-one-time-selection} with $\alpha = 0.99$.  This assignment to $\alpha$ in \eqref{eq:optimal_static_l} implies that 
\begin{equation}
    l^{*} = 0.078 .
\end{equation}
Note that this value of $l^{*}$ is only for the one-time selection.  The PMPC method induces different parameter values every $0.033~s$.

\subsection{Simulation Results}
\label{subsec:simulation-results}

This section contains the simulation results for the method described in Sec.~\ref{subsec:experiment_setup}.  In particular, the simulation compares the proposed PMPC method to the one-time selection process in Sec.~\ref{subsec:an-optimal-one-time-selection}, showing that the PMPC method can outperform the one-time selection.  Fig.~\ref{fig:omega_sim} shows the simulated angular velocities, and Fig.~\ref{fig:parameter_interval_sim} shows the parameter and sampling horizon evolution.  Both methods generate similar control inputs.  However, Fig.~\ref{fig:parameter_interval_sim} demonstrates that the PMPC method selects smaller parameter values, implying that this method provides better reference tracking.  

Additionally, Fig.~\ref{fig:parameter_interval_sim} also shows that the sampling horizon shortens and the parameter increases around the left and right portions of the ellipse, because these regions require sharper maneuvers and incur a higher maneuverability cost.  Furthermore, the ellipsoidal reference trajectory induces the oscillations in Fig.~\ref{fig:parameter_interval_sim}.  Overall, these simulated results show that the PMPC method can outperform a static parameterization.

\subsection{Experimental Comparison}
\label{subsec:experimental-comparison}

This section contains the experimental results of the implementation described in Sec.~\ref{subsec:experiment_setup}.  The physical experiments for this paper were deployed on the Robotarium and serve to highlight the efficacy and validity of applying the PMPC approach on a real system.  Additionally, the experiments display the propriety of the maneuverability cost outlined in Sec.~\ref{subsec:man-cost}.  

Figs.~\ref{fig:omega_exp}-\ref{fig:parameter_interval_exp} display the angular velocity of the mobile robot, the sampling horizon, and the parameter selection, respectively.  As in the simulated results, Fig.~\ref{fig:omega_exp} shows that the static parameterization and PMPC method produce similar angular velocities, and Fig.~\ref{fig:parameter_interval_exp} shows that the PMPC method is able to adaptively adjust the parameter and sampling horizon to handle variations in the reference signal.

Moreover, on a physical system, the PMPC method still adjusts the time horizon and parameter to account for maneuverability requirements.  For example, on the left and right sides of the ellipse, the maneuverability cost rises, because the reference turns sharply.  Thus, the parameter increases and the sampling horizon decreases.  Over flat portions of the ellipse, the maneuverability cost decreases, permitting the extension of the time horizon and reduction of the parameter (i.e., better tracking).  That is, reductions of the maneuverability cost permit decreasing the parameter $l$, allowing the PMPC strategy to outperform the static parameterization.  Furthermore, the decrease of the sampling horizon during high-maneuverability regions accelerates the execution of Alg.~\ref{alg:mpc}, which is useful in a practical sense.

\section{Conclusion}
\label{sec:conclusion}

This work presented a variable-sampling-horizon Parametric Model Predictive Control (PMPC) method that allows for optimal parameter and sampling horizon selection with the application of controlling differential-drive mobile robots.  To formulate an appropriate cost for the PMPC strategy, this article discussed a class of Near-Identity Diffeomorphisms (NIDs) that allow the transformation of single-integrator algorithms to unicycle-modeled systems.  Additionally, this work showed an inherent trade-off induced by the NID and formulated precision and maneuverability costs that allow for the optimal parameterization of the NID via a PMPC program.  Furthermore, simulation and experimental results were produced that illustrated the validity of the proposed costs and the efficacy of the PMPC method.

\bibliographystyle{ieeetr}
\bibliography{ref.bib}

\end{document}